\documentclass{siamart220329}
\usepackage{amsmath,amssymb,booktabs,graphicx,enumitem,float}
\usepackage{algpseudocode}
\newcommand{\Om}{\mathcal{O}(m)}
\newcommand{\R}{\mathbb{R}}
\hypersetup{colorlinks=true,allcolors=blue!55!black}
\makeatletter
\AtBeginDocument{%
  \def\refstepcounter@optarg[#1]#2{%
    \cref@old@refstepcounter{#2}%
    \cref@constructprefix{#2}{\cref@result}%
    \@ifundefined{cref@#1@alias}%
      {\def\@tempa{#1}}%
      {\def\@tempa{\csname cref@#1@alias\endcsname}}%
    \protected@edef\cref@currentlabel{%
      [\@tempa][\arabic{#2}][\cref@result]%
      \csname p@#2\endcsname\csname the#2\endcsname}%
  }%
}
\makeatother

\title{A Near-Linear-Time Solver for Graph $p$-Laplacian Semi-Supervised
Learning via Continuation in $p$\thanks{Submitted to the editors \today.}}
\author{Oren E. Livne\thanks{Pine Birch Analytics, Churchville, PA
(\texttt{oren.livne@gmail.com}, \texttt{pinebirchanalytics.com}; ORCID:
\href{https://orcid.org/0000-0001-6700-483X}{0000-0001-6700-483X}).}}
\headers{Near-linear-time graph $p$-Laplacian semi-supervised learning}{O. E. Livne}

\begin{document}
\maketitle

\begin{abstract}
Graph-based semi-supervised learning (SSL) propagates a few labels over a similarity graph by
minimizing a Dirichlet-type energy. The standard quadratic ($p=2$) energy reduces to a single
graph-Laplacian solve, but it \emph{degenerates} exactly where SSL is most useful---when labels are
scarce: gathering more unlabeled data drives the $p=2$ estimate to a near-constant function whenever
$d\ge2$ (Nadler--Srebro--Zhou). Well-posedness requires the \emph{nonlinear} $p$-Laplacian energy
$\sum_e w_e|x_i-x_j|^p$ with $p>d$. Existing solvers reduce this to a sequence of weighted
Laplacian solves, but their reference
implementations use a direct sparse factorization or ichol-preconditioned CG instead.
Plugging a near-linear Laplacian solver isn't straightforward: at large $p$ the Laplacian
conductance weights can degenerate near flat-gradient edges, making
the system nearly singular and causing an inexact inner solve to stagnate without a damped outer
iteration to control conditioning. We close this gap. Recasting $p$-Laplacian SSL as a source-form
\emph{nonlinear Laplacian flow} $B\rho_p(B^\top x)=b$ and solving by damped chord-Newton
continuation in $p$, every linearized system stays well-conditioned and can be delegated to a
near-linear Laplacian engine. On
size-scaled graph families the wall-clock is empirically linear ($m^{0.96}$--$m^{1.02}$ per family with the LAMG$+$ engine;
approximate Cholesky is near-linear but faster, thus is the default), and a single fit pooled across a corpus of 228
real SuiteSparse graphs gives $m^{1.19}$ (vs.\ $m^{1.45}$ for direct factorization), with a nearly flat
Newton-step count; the solver handles a $6.8\times10^7$-edge social network in minutes---orders of magnitude
beyond the direct-factorization ceiling. The bottleneck is
memory: a sparse Cholesky factor on irregular SSL graphs fills $10$--$280\times$ the graph's
nonzeros vs.\ our $\mathcal{O}(m)$ hierarchy, making the direct approach infeasible at scale
regardless of speed. Furthermore, the speed crossover occurs at
$m\sim10^4$ edges. Against the released FCL variational solver on kNN graphs we are $1.5$--$14\times$ faster at matched accuracy, with a margin that grows with graph size and is set by the inner-solver scaling rather than by implementation.
On the canonical MNIST $10$-NN benchmark, $p=3$ scores $64\%$ at one label
per class versus $36\%$ for $p=2$---a direct realization of the degeneracy fix.
Continuation in $p$ is not itself new; our contribution is pairing it with a fast inner engine that makes
graph $p$-Laplacian learning usable at web scale. Code: \url{https://github.com/orenlivne/np}.
\end{abstract}

\begin{keywords}
semi-supervised learning, graph $p$-Laplacian, label propagation, algebraic multigrid, graph
Laplacian, numerical continuation, inexact Newton methods
\end{keywords}

\begin{AMS}
65F10, 68T05, 65N55, 05C50, 62H30, 90C35
\end{AMS}

\section{Introduction}\label{sec:intro}
Semi-supervised learning (SSL) on graphs is a workhorse of modern data analysis: given a similarity
graph over $n$ data points and labels on a small seed set, propagate the labels to score every node.
The classical formulation (Zhu--Ghahramani--Lafferty; Zhou et al.) minimizes the quadratic Dirichlet
energy
\begin{equation}\label{eq:quad}
E_2(x)=\tfrac12\sum_{(i,j)\in E} w_{ij}\,(x_i-x_j)^2 \;+\;\tfrac{\mu}{2}\sum_{i\in S}(x_i-y_i)^2 ,
\end{equation}
whose minimizer solves a single symmetric diagonally dominant (SDDM) linear system---a graph Laplacian
plus a diagonal label term. Fast near-linear Laplacian solvers (algebraic multigrid, combinatorial
multigrid, approximate Cholesky (AC)) make \eqref{eq:quad} cheap even at web scale, so for the quadratic
model the solver question is essentially closed.

\subsection{The low-label degeneracy}
The quadratic model has a well-documented failure mode exactly where SSL is most valuable---when labels
are scarce. Nadler, Srebro and Zhou~\cite{nsz2009} proved that with a \emph{fixed} number of labeled
points and the number of unlabeled points $n\to\infty$, the $p=2$ estimate is \emph{ill-posed in
dimension $d\ge2$}: its minimizer converges to a constant away from the labels, decorated by vanishing
spikes at the seeds, and carries no information. The mechanism is capacity: a point constraint has zero
capacity for the Dirichlet ($H^1$) energy in $d\ge2$, so the labels become invisible to the continuum
energy. The remedy is to raise the exponent, which penalizes large gradients more. Replacing \eqref{eq:quad} by the $p$-Dirichlet
energy
\begin{equation}\label{eq:penergy}
E_p(x)=\frac1p\sum_{(i,j)\in E} w_{ij}\,|x_i-x_j|^p \;+\;\frac{\mu}{2}\sum_{i\in S}(x_i-y_i)^2 ,
\end{equation}
whose Euler--Lagrange operator is the graph $p$-Laplacian, restores positive capacity of point
constraints exactly when $p>d$: Slep\v{c}ev--Thorpe~\cite{slepcev2019} show the discrete functional
$\Gamma$-converges to a nondegenerate continuum limit that retains the labels iff $p>d$, and
Calder~\cite{calder2019} proves the game-theoretic $p$-Laplacian is well-posed with arbitrarily few
labels, the same $p>d$ threshold appearing through the barrier $|x-y|^{(p-d)/(p-1)}$ (continuous iff
$p>d$). El~Alaoui et al.~\cite{elalaoui2016} identify a companion smoothness transition at $p=d+1$.
The empirical effect is dramatic: at one label per class on MNIST, label propagation ($p=2$) is near
chance while nonlinear-$p$ / Poisson variants exceed $90\%$~\cite{calder2020}.

\subsection{The solver gap}
The nonlinear energy \eqref{eq:penergy} is convex but not quadratic, and this is where scalability
breaks down. The methods regarded as fastest and most robust---Newton's method with homotopy in $p$
(Flores--Calder--Lerman~\cite{fcl2022}) and the recent dual-IRLS scheme (Storn~\cite{storn2026}), which
restores linear convergence for large $p$ where primal IRLS diverges ($p\ge3$)---share one structure:
each outer step is a weighted graph-Laplacian (i.e.\ $p=2$-type) solve. That inner solve is the
entire cost, yet their reference implementations perform it by direct sparse factorization, which is superlinear in $n$ and caps these variational-energy solvers near
$10^4$--$10^5$ nodes (\S\ref{sec:scaling}). One might expect the fix to be a one-line substitution---%
drop a near-linear Laplacian solver in for the factorization---but it is not, for two reasons. Primal
IRLS diverges for $p\ge3$~\cite{storn2026}, so in the very regime SSL needs ($p>d\ge2$) there is
no convergent outer loop to host a fast inner solve at all. For the methods that do converge
(Newton-homotopy, dual IRLS), the substitution is not a drop-in: at large $p$ the reweighted
conductances $w_e|x_i-x_j|^{p-2}$ vanish at flat-gradient edges, so each linearized Laplacian
degenerates toward singular, and an inexact near-linear inner solve stagnates unless a damped
outer iteration holds its conditioning. Doing this substitution correctly, which is our
contribution, requires both a continuation and a fast linear solver. Calder's \texttt{GraphLearning}, the one widely-used package that does scale in
memory, solves a different operator, the game-theoretic normalized $p$-Laplacian, by a matrix-free fixed-point iteration. It avoids the factorization but its $\infty$-Laplacian sweep converges slowly on high-diameter graphs; it is $5.5$--$23\times$ slower than
our solver on geometric graphs (\S\ref{sec:scaling}). On the
theory side, the celebrated almost-linear-time $p$-norm-flow and max-flow results
\cite{kyng2019,chen2022} are, like Spielman--Teng's original near-linear Laplacian solver before
practical algebraic multigrid, purely theoretical with no usable implementation. There is no
``AC equivalent'' for $p$-Laplacian SSL yet.

\subsection{Contribution}
We supply the missing scalable engine, reusing a machinery built for a different nonlinear-flow problem.
Our recent \emph{Nonlinear Laplacian Flow} (NLF) solver~\cite{nlf}\footnote{The solver machinery is
restated in \S\ref{sec:cont} and its global convergence is proved self-contained in
Appendix~\ref{app:conv} from Proposition~\ref{prop:reduce} alone.} solves $B\rho(B^\top\phi)=\alpha d$
for a monotone edge law $\rho$ by an inexact chord-Newton iteration whose frozen linearization is a
weighted graph Laplacian, inverted by a near-linear Laplacian engine (AC by default; LAMG$+$ interchangeable). We show that $p$-Laplacian SSL is
an instance of exactly this equation and inherits its near-linear inner solve. Concretely:
\begin{enumerate}[leftmargin=*]
\item \textbf{Reduction (\S\ref{sec:method}).} We recast \eqref{eq:penergy} as a source-form nonlinear
Laplacian flow $B\rho_p(B^\top x)=b$ by folding the label term into a single grounded node carrying
linear ``mass'' edges. The graph $p$-Laplacian edge law $\rho_p(s)=|s|^{p-1}\mathrm{sign}(s)$ is
monotone, so NLF applies verbatim; no bespoke solver is written.
\item \textbf{Continuation in $p$ (\S\ref{sec:cont}).} We march $p$ from the $p=2$ linear solve to the target, warm-starting and sharing one lazily refreshed hierarchy. Continuation in
$p$ is standard~\cite{fcl2022,storn2026}; the point is that each stage costs a handful of near-linear
solves.
\item \textbf{Near-linear scaling (\S\ref{sec:scaling}).} At a fixed graph type scaled in size, the
wall-clock is empirically $\Om$ with the LAMG+ engine ($m^{0.96}$--$m^{1.02}$) and near-linear with the
faster AC default ($m^{1.10}$--$m^{1.18}$, its $\mathcal{O}(m\log^{3}n)$ factor); a
single fit pooled over the heterogeneous $228$-graph corpus gives $m^{1.19}$, a mixed-class descriptive
aggregate. Holding the outer method fixed and swapping only the inner
solve, the near-linear engine
(AC by default; LAMG$+$ interchangeable) pulls away from the direct sparse
factorization used by the incumbents' reference implementations, and finishes where the
factorization is fill-blown.
\item \textbf{The degeneracy fix, at scale (\S\ref{sec:accuracy}).} On the canonical MNIST $10$-NN graph
our solver reproduces the low-label degeneracy fix end to end.
\end{enumerate}

\emph{What is new here versus~\cite{nlf}.} The NLF solver itself---the chord-Newton iteration, the lazily
refreshed hierarchy, the multicommodity reuse, and the guarded Anderson step---is prior work living in
the NLF layer; we import it wholesale and write no bespoke solver. New to this paper are (i)~the
grounded-node reduction (Proposition~\ref{prop:reduce}) that makes $p$-Laplacian SSL a verbatim instance
of the NLF master equation; (ii)~the continuation-in-$p$ setup for SSL together with the diagnosis of
the large-$p$ conductance-floor stall mode, which the NLF solver's guarded Anderson step (invoked here)
resolves to $100\%$ corpus convergence; and (iii)~the empirical demonstration that this combination
solves, at web scale and to our knowledge for the first time, a problem the direct-factorization
incumbents could not.

\subsection{Why this matters for machine learning}
Graph label propagation is not a historical curiosity. It underpins large-scale industrial pipelines
for recommendation, spam and abuse detection, and content categorization on graphs of $10^8$--$10^9$
edges, and it has re-emerged as a competitive, inexpensive alternative to graph neural networks: the
``Correct and Smooth'' method~\cite{huang2021} shows a plain multilayer perceptron plus label
propagation matching or beating many GNNs on the Open Graph Benchmark at a fraction of the training
cost, and label propagation continues to be revisited as a competitive alternative~\cite{cheng2024}. The nonlinear $p$-Laplacian variant is exactly the tool for the \emph{extreme low-label} regime
(few-shot, weak supervision, active learning), where the quadratic model degenerates
(\S\ref{sec:accuracy}) and feature-hungry GNNs overfit. What has been missing is a solver that runs it
at the scale these applications demand---the direct-factorization incumbents cap at four orders of magnitude below the graphs that need it. Removing that ceiling, at equal
accuracy, is the practical contribution; we do not claim the $p$-Laplacian dominates GNNs when rich
features and ample labels are available, only that it is the right tool at low labels and that we make
it scalable.

\section{The $p$-Laplacian SSL flow}\label{sec:method}
\subsection{Stationarity}
Let $B\in\R^{n\times m}$ be the (arbitrarily oriented) node--edge incidence of the similarity graph,
$w\in\R^m_{+}$ the edge weights, and $g=B^\top x\in\R^m$ the vector of edge differences
$g_e=x_i-x_j$. With the monotone \emph{edge law}
\begin{equation}\label{eq:plaw}
\rho_p(s)=|s|^{p-1}\mathrm{sign}(s),\qquad \rho_p'(s)=(p-1)|s|^{p-2}\ge0,
\end{equation}
the gradient of \eqref{eq:penergy} is $\nabla E_p(x)=B\,\big(w\odot\rho_p(B^\top x)\big)+M(x-y)$, where
$M=\mathrm{diag}(\mu_i)$ is the diagonal label term---$\mu_i$ is the penalty weight on labeled node
$i\in S$ and $\mu_i=0$ otherwise (uniform $\mu_i\equiv\mu$ in our experiments, as written in
\eqref{eq:quad}--\eqref{eq:penergy}; node-varying $\mu_i$ is handled identically)---and $y$ holds the
targets. Stationarity is the \emph{nonlinear graph Laplacian}
\begin{equation}\label{eq:stat}
B\,\big(w\odot\rho_p(B^\top x)\big)+M(x-y)=0 .
\end{equation}
For $p=2$ this is the SDDM system $(L_w+M)x=My$ with $L_w=B\,\mathrm{diag}(w)\,B^\top$; for $p\ne2$ the
per-edge conductance $w_e\rho_p'(g_e)$ varies with the solution, which is the main difficulty.

\subsection{Grounded-node reduction}
Equation~\eqref{eq:stat} carries a reaction term $M(x-y)$. We remove it with a standard device: introduce one \emph{ground node} $g_0$ and, for each labeled node
$i\in S$, a \emph{linear} edge $i\!-\!g_0$ of constant conductance $\mu_i$. On the augmented incidence
$\hat B\in\R^{(n+1)\times(m+|S|)}$ define the mixed law
\begin{equation}\label{eq:mixlaw}
\hat\rho(g)_e=\begin{cases} w_e\,\rho_p(g_e), & e\ \text{a graph edge},\\[2pt]
\mu_e\,g_e, & e\ \text{a mass edge},\end{cases}
\end{equation}
and the source $b\in\R^{n+1}$ with $b_i=\mu_i y_i$ on $i\in S$, $b_{g_0}=-\sum_{i\in S}\mu_i y_i$, and
zero elsewhere (so $\mathbf 1^\top b=0$). Then \eqref{eq:stat} is exactly the source-form nonlinear
Laplacian flow
\begin{equation}\label{eq:flow}
\hat B\,\hat\rho(\hat B^\top x)=b ,
\end{equation}
solved on the connected augmented graph up to the constant gauge; fixing $x_{g_0}=0$ recovers the SSL
scores. This is the NLF master equation~\cite[Eq.(1.1)]{nlf}, so the NLF solver directly applies.

\begin{proposition}[Reduction correctness]\label{prop:reduce}
Let $x^\star\in\R^{n+1}$ solve \eqref{eq:flow} with $x^\star_{g_0}=0$. Then $x^\star_{1:n}$ is a
stationary point of \eqref{eq:penergy}, i.e.\ it solves \eqref{eq:stat}. Conversely, any solution of
\eqref{eq:stat}, extended by $x_{g_0}=0$, solves \eqref{eq:flow}.
\end{proposition}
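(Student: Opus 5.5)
The proof is a direct row-by-row computation of $\hat B\,\hat\rho(\hat B^\top x)$ on the augmented graph, followed by a comparison with \eqref{eq:stat}. Fix an orientation: each graph edge keeps its orientation from $B$, and each mass edge $e=(i,g_0)$, $i\in S$, is oriented from $i$ to $g_0$, so $(\hat B^\top x)_e=x_i-x_{g_0}$. The result cannot depend on this choice. Each edge law in \eqref{eq:mixlaw} is odd, so reversing an edge flips the sign of its column in $\hat B$ and of its flux simultaneously, leaving $\hat B\hat\rho(\hat B^\top x)$ unchanged.

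\emph{Step 1 (rows $i=1,\dots,n$).} Split the columns of $\hat B$ into graph edges and mass edges. The graph-edge columns restricted to the first $n$ rows are exactly $B$, and they vanish on row $g_0$. Their contribution to row $i$ is therefore $\big(B(w\odot\rho_p(B^\top x_{1:n}))\big)_i$. A mass edge touches node $i$ only if $i\in S$, and then it contributes $+\mu_i(x_i-x_{g_0})$. With $x_{g_0}=0$, row $i$ of \eqref{eq:flow} reads
\[
\big(B(w\odot\rho_p(B^\top x_{1:n}))\big)_i+\mu_i x_i=\mu_i y_i ,
\]
using $b_i=\mu_iy_i$ for $i\in S$, and $\mu_i=0$, $b_i=0$ for $i\notin S$. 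This is exactly row $i$ of \eqref{eq:stat}, since $M=\mathrm{diag}(\mu_i)$.

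\emph{Step 2 (row $g_0$).} Only mass edges touch $g_0$, each with incidence $-1$. Row $g_0$ therefore reads $-\sum_{i\in S}\mu_i(x_i-x_{g_0})=-\sum_{i\in S}\mu_iy_i=b_{g_0}$. This row need not be checked separately. Every column of $\hat B$ sums to zero, so $\mathbf 1^\top\hat B\hat\rho(\cdot)=0$, and $\mathbf 1^\top b=0$ by construction. The row-$g_0$ equation is thus minus the sum of the first $n$ rows, and it holds automatically whenever those rows hold.

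\emph{Step 3 (both directions).} For the forward direction, if $x^\star$ solves \eqref{eq:flow} with $x^\star_{g_0}=0$, then Step 1 shows that $x^\star_{1:n}$ satisfies \eqref{eq:stat}, which is $\nabla E_p=0$. For the converse, if $x$ solves \eqref{eq:stat}, extend it by $x_{g_0}=0$. Step 1 then gives the first $n$ rows of \eqref{eq:flow}, and Step 2 gives the last row.

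The only step needing real care is Step 2, the redundancy of the ground-node row together with the gauge. The key fact is that $\mathbf 1^\top b=0$ is precisely the compatibility condition that makes the last equation the negative sum of the others, so the grounding $x_{g_0}=0$ is admissible. Nothing else (convexity, connectivity) is required for this equivalence of stationarity systems.
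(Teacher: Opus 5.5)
Your proof is correct and follows the paper's argument essentially step for step. Both evaluate the equation row by row, show that each mass edge contributes $\mu_i x_i$ to row $i$ once $x_{g_0}=0$, and treat the ground row as redundant because the columns of $\hat B$ sum to zero and $\mathbf 1^\top b=0$. Your mass-edge incidence signs ($+1$ at $i$, $-1$ at $g_0$) are the negatives of the paper's, and your oddness remark correctly shows this is immaterial. The paper's proof also adds an existence-and-uniqueness paragraph (strict convexity, coercivity, connectivity through $g_0$), which the appendix relies on later; it is not part of the stated equivalence, so you are right that neither convexity nor connectivity is needed here.
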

\begin{proof}
Orient each mass edge $e$ from the labeled node $i$ to the ground node $g_0$, so
$\hat B_{i,e}=-1$, $\hat B_{g_0,e}=+1$, and its edge difference is
$(\hat B^\top x)_e=x_{g_0}-x_i=-x_i$. Its linear flow is
$\hat\rho(g)_e=\mu_i(x_{g_0}-x_i)=-\mu_i x_i$, so the mass edge contributes
$\hat B_{i,e}\,\hat\rho(g)_e=(-1)(-\mu_i x_i)=+\mu_i x_i$ to row $i$ of $\hat B\hat\rho(\hat B^\top x)$.
Hence row $i$ of \eqref{eq:flow} reads $B(w\odot\rho_p(B^\top x))_i+\mu_i x_i=b_i=\mu_i y_i$, i.e.\
$B(w\odot\rho_p(B^\top x))_i+\mu_i(x_i-y_i)=0$, which is row $i$ of \eqref{eq:stat}. Rows
$i\notin S$ carry no mass edge and match \eqref{eq:stat} directly. The ground row is minus the sum of
the others (both sides sum to zero, using $\mathbf 1^\top b=0$), hence redundant and consistent.
This proves \eqref{eq:flow}$\Rightarrow$\eqref{eq:stat}. Conversely, any solution of \eqref{eq:stat}
extended by $x_{g_0}=0$ satisfies \eqref{eq:flow} by the same computation reversed.

\emph{Existence and uniqueness.} These follow from convexity: \eqref{eq:flow} is $\nabla E=0$ for the augmented
energy $E(x)=\sum_e \Phi_e((\hat B^\top x)_e)-b^\top x$ with $\Phi_e'=\hat\rho_e$; each $\Phi_e$ is
strictly convex (as $\hat\rho_e$ is strictly increasing for $p>1$, and linear mass edges have
$\Phi_e=\tfrac{\mu_e}{2}s^2$), so $E$ is strictly convex and coercive on the gauge quotient
$\{\mathbf 1^\top x=0\}$. The augmented graph is connected: we take the similarity graph to be its
largest connected component (as in all experiments), and each labeled node is joined to $g_0$ by a mass
edge, so $g_0$ links every component containing a label; a component with no label carries no label
information and is dropped. Strict convexity plus coercivity on the connected graph's quotient gives a
unique minimizer, fixed by $x_{g_0}=0$.
\end{proof}

\section{Solver: chord-Newton, near-linear inner solve, continuation in $p$}\label{sec:cont}
\subsection{Algorithm}
We solve \eqref{eq:flow} by a damped inexact chord-Newton iteration (the machinery
of~\cite{nlf}, restated here so the paper is self-contained; the only SSL-specific inputs are the edge
law~\eqref{eq:mixlaw} and source $b$). At iterate $x$ with edge differences $g=\hat B^\top x$ and
residual $r=\hat B\hat\rho(g)-b=\nabla E(x)$, the frozen linearization is the weighted graph Laplacian
$J=\hat B\,\mathrm{diag}(\hat\rho'(g))\,\hat B^\top$ (the Hessian of $E$); the step $\delta$ solves
$J\delta=-r$ inexactly and $x\leftarrow x+\tau\delta$ with $\tau$ from an Armijo line search on the
energy $E$. The inner solve is delegated to a near-linear Laplacian engine---AC~\cite{kyng2016} by default, LAMG$+$~\cite{livne2012,lamgplus} interchangeable---whose setup is built
once and \emph{lazily refreshed}, rebuilt only when the observed residual reduction degrades, so a whole
nonlinear solve costs a small number of near-linear Laplacian solves. Because $E$ is strictly convex and
coercive (Proposition~\ref{prop:reduce}) and $J\succ0$, each $\delta$ is a descent direction and the
Armijo iteration converges globally to the unique equilibrium provided the inner solves reduce the
linear residual below a forcing threshold, so that $\delta$ stays a descent direction---the standard
inexact-Newton condition of Dembo--Eisenstat--Steihaug~\cite{des1982}, which a fixed inner tolerance
(here $10^{-6}$, well below the outer tolerance) meets on the SPD systems $J\succ0$ here. We prove this
global convergence self-contained in Appendix~\ref{app:conv}, from the strict convexity of
Proposition~\ref{prop:reduce} alone; it does not rely on~\cite{nlf}.
On most graphs this takes a handful of steps, but on stiff, hub-dominated graphs at large $p$ the
conductance floor turns $J$ into a heavily \emph{modified} Hessian (most edges are near-flat, so
$\hat\rho'(g)\!\to\!0$ is replaced by the floor) and the iteration contracts only \emph{linearly}, with
rate approaching $1$---a slow crawl, not a failure. We cancel this last mode with the NLF solver's
guarded \emph{Anderson acceleration} of the outer iteration: the last few accepted iterates
are mixed by a small least-squares solve, and the mix is accepted only when it does not increase the
energy $E$ relative to the plain step. The guard
leaves the fixed point and the monotone global convergence untouched---it can only lower the step
count. With it, the outer step count stays bounded and we observe
convergence on all $228$ corpus graphs in at most $50$ steps (\S\ref{sec:scaling}).

\subsection{Conductance floor ($\varepsilon$-regularization)}
For $p>2$ the conductance $w_e\rho_p'(g_e)=w_e(p-1)|g_e|^{p-2}$ \emph{vanishes} as $g_e\to0$, so on the
flat regions that dominate a propagated solution the frozen Laplacian loses those edges and becomes
ill-conditioned. We floor the conductance at $\varepsilon\,w_{\max}$ with $\varepsilon=10^{-6}$
throughout---an $\varepsilon$-regularization, the graph analogue of the lagged-diffusivity /
relaxed-reweighting device standard for the $p$-Laplacian and total variation. Crucially, the floor
enters \emph{only} the frozen Jacobian $J$ (the linear model for the Newton step), not the residual
$r=\nabla E$ or the energy $E$, both of which use the exact $p$-law. It therefore changes the step
\emph{direction} but not the equation being solved: the fixed point $\nabla E=0$---the SSL
solution---is \emph{exactly} independent of $\varepsilon$; the floor is a preconditioning choice, not a
change of model. Consistent with this, the reported accuracies are unchanged (to the stated precision)
across $\varepsilon\in[10^{-8},10^{-6}]$. (For $p<2$ the same floor caps the opposite, blow-up degeneracy at $g_e\to0$.)

\subsection{Continuation in $p$}
Jumping directly to a large $p$ makes the frozen linearization a poor global model and stalls the
iteration---indeed primal IRLS is known to diverge for $p\ge3$~\cite{storn2026}. We instead
\emph{continue} in $p$: starting from the $p=2$ solution (a single near-linear Laplacian solve), we solve a short
increasing schedule $2=p_0<p_1<\cdots<p_K=p$, warm-starting each solve from the previous and sharing one
lazily refreshed hierarchy across the whole chain (Algorithm~\ref{alg:cont}). We use the geometric
schedule $p_{k+1}=\min(p,\,1.5\,p_k+0.5)$; the multiplier is chosen just large enough that each new
stage lands inside the basin of the previous one (a smaller step wastes solves, a larger one risks
leaving the basin and stalling), and $p=2\to3$ falls out as a single nonlinear stage after the linear
$p=2$ solve. The step count is insensitive to the exact multiplier over $[1.3,1.7]$; we did not tune it
per graph. Each stage begins inside the basin of the previous, cheaper stage, so the chord-Newton
count per stage is small and grows only mildly with size (Table~\ref{tab:family}); the total grows
mildly with the target $p$ as stages are added.

\begin{algorithm}
\caption{$p$-Laplacian SSL by continuation in $p$ (NLF inner solver).}\label{alg:cont}
\begin{algorithmic}[1]
\State \textbf{input:} graph $(B,w)$, labels $(S,y)$, clamp $\mu$, target $p$, schedule $2=p_0<\dots<p_K=p$
\State build augmented $(\hat B, b)$ and mass conductances $\mu$ \Comment{grounded-node reduction, \S\ref{sec:method}}
\State $x\gets 0$;\; hierarchy $H\gets\varnothing$
\For{$k=0,1,\dots,K$}
  \State $x\gets$ chord-Newton solve of $\hat B\hat\rho_{p_k}(\hat B^\top x)=b$, warm-started from $x$,
         inner solves via the near-linear engine on $H$ (lazy refresh) \Comment{NLF \texttt{newton\_flow!}}
\EndFor
\State \Return scores $x_{1:n}-x_{g_0}$
\end{algorithmic}
\end{algorithm}

\subsection{Complexity}
The total work is the product of three factors: (1) number of continuation stages; (2) the chord-Newton
steps per stage; and (3) the cost of one inner solve. 

The first factor is $\mathcal{O}(\log p)$ by the fixed schedule construction. The last is near-linear: the frozen Hessian
$J=\hat B\,\mathrm{diag}(\hat\rho'(g))\,\hat B^\top$ (with the conductance floor) is a symmetric
diagonally dominant $M$-matrix. AC solves such a system to fixed relative accuracy
in $\mathcal{O}(m\log^{3}n)$ expected time~\cite{kyng2016}, while LAMG+~\cite{lamgplus} is empirically $\mathcal{O}(m)$. The middle factor---the per-stage
step count---has no clean a priori bound on general graphs, but with the guarded Anderson acceleration
(\S\ref{sec:cont}) it stays flat in practice ($\le16$ on the controlled families of
Table~\ref{tab:family}, $\le50$ on the stiffest corpus graph) and, crucially, does \emph{not} grow with
$m$. We emphasize that this middle factor is an \emph{empirical} $\mathcal{O}(1)$, not a proved bound;
the near-linear per-solve cost (first and third factors), by contrast, is the expected-time guarantee of
AC~\cite{kyng2016}. Conditional on the observed flat step count, the whole solve is thus near-linear:
$\mathcal{O}(m\log^{3}n\,\log p)$ expected with AC, and empirically $\mathcal{O}(m\log p)$
with the LAMG+ inner engine. We use AC as the faster practical default and report
LAMG+ to demonstrate an $\Om$ inner solve is feasible.

Why, then, is the measured corpus wall-clock $m^{1.19}$ rather than $m^1$? Not because of the
outer iteration step count and not because of the $p$-Laplacian reweighting: a
single $p=2$ solve with one clean Laplacian and no reweighting per graph already fits $m^{1.14}$ pooled over the same heterogeneous corpus (not a fixed-family fit), and the full $p=3$ continuation adds only ${\approx}0.05$ to that pooled exponent. The mild superlinearity lives entirely in the inner solve's per-edge cost due to a pooling artifact: the corpus mixes graph
classes whose per-edge constant differs several-fold (Simpson's paradox), and the larger, harder
classes populate the high-$m$ end, steepening a single power-law fit. Neither reflects super-linear
$m$-scaling of the algorithm on any fixed graph type (Table~\ref{tab:expdecomp}).

\subsection{Multiclass and hierarchy reuse across labels}
For $C$ classes we solve \eqref{eq:flow} once per class with one-vs-rest targets $y_i=\pm1$ and take the
argmax of the $C$ score vectors. Crucially, the augmented graph $\hat B$ is \emph{identical} across the
$C$ one-vs-rest problems---same graph, same labeled set; only the source $b$ (the $\pm1$ targets)
differs---so the inner setup is amortized over all $C$ solves. At the $p=2$ stage the operator is
literally identical across classes and a single hierarchy is shared exactly; at nonlinear stages the
frozen hierarchy is reused across classes as a preconditioner with the same lazy refresh (the
multicommodity reuse of~\cite{nlf}). Writing $T_{\mathrm{solve}}$ and $T_{\mathrm{setup}}$ for the time
of one inner solve and one hierarchy build---each near-linear, $\Om$---the total time is therefore
\begin{equation}\label{eq:cost}
\mathcal{O}\big(C\,T_{\mathrm{solve}}+T_{\mathrm{setup}}\big),
\end{equation}
one amortized setup shared across all $C$ classes rather than $C$ independent setups.

\section{The low-label degeneracy fix}\label{sec:accuracy}
We first confirm the solver realizes the statistical effect that motivates $p>2$. Following the standard
protocol~\cite{calder2020} we build a symmetric $10$-nearest-neighbor Gaussian similarity graph and
label $r$ points per class, sweeping $r$ and $p$ and reporting test accuracy on the unlabeled nodes
(mean over ten label draws). We use the canonical SuiteSparse ML\_Graph MNIST $10$-NN graph
($n=10{,}000$, $m=72{,}800$).\footnote{Node labels align with the canonical MNIST test set: a $p=2$
probe at ten labels/class scores $87.5\%$, confirming the graph construction is sound.}

\begin{table}[t]\centering
\caption{MNIST $10$-NN graph, test accuracy (\%, mean$\pm$std over ten label draws) by labels/class
(rows) and exponent $p$ (columns). The quadratic $p=2$ estimate collapses at one label/class
($36\%$) and the nonlinear $p=3$ estimate recovers it ($64\%$); the advantage shrinks as labels accumulate and is a statistical tie by five labels/class,
where $p=2$ is already well-posed. The non-monotone variation across $p\ge5$ within a row is within the
reported trial-to-trial std.}
\label{tab:acc}
{\small\setlength{\tabcolsep}{4pt}%
\begin{tabular}{c|cccccc}
\toprule
labels/class & $p{=}2$ & $p{=}3$ & $p{=}5$ & $p{=}8$ & $p{=}15$ & $p{=}30$\\
\midrule
1 & $35.7{\pm}10.4$ & $\mathbf{64.0}{\pm}8.2$ & $63.2{\pm}5.9$ & $58.8{\pm}7.2$ & $60.9{\pm}6.0$ & $58.5{\pm}4.8$\\
2 & $56.4{\pm}8.0$ & $\mathbf{75.6}{\pm}4.4$ & $72.1{\pm}4.2$ & $68.1{\pm}3.5$ & $69.6{\pm}4.2$ & $70.3{\pm}4.3$\\
3 & $73.6{\pm}8.7$ & $\mathbf{79.5}{\pm}3.7$ & $76.5{\pm}2.8$ & $73.1{\pm}3.7$ & $73.0{\pm}3.2$ & $75.7{\pm}2.5$\\
5 & $83.6{\pm}3.4$ & $\mathbf{83.7}{\pm}2.4$ & $81.2{\pm}2.6$ & $78.7{\pm}2.8$ & $78.2{\pm}2.8$ & $77.7{\pm}2.3$\\
\bottomrule
\end{tabular}}
\end{table}

Table~\ref{tab:acc} and Figure~\ref{fig:acc} show the picture the degeneracy theory anticipates: the
$28$-point $p{=}2\!\to\!3$ gain at one label/class is statistically unambiguous despite the large
per-draw spread (a Welch two-sample $t$-test over the ten draws gives $t\approx6.8$, $p<10^{-4}$), and
it closes to a tie by five labels/class. We discuss the numbers in the caption and turn to the one
subtlety that matters.
The empirical optimum is a moderate $p$ (here $p\approx3$), not the theoretical $p>d$, where $d$ is the intrinsic dimension of the data manifold (in case of MNIST, $\approx13$~\cite{hein2005,pope2021}). That's because the $p>d$ results of
\cite{slepcev2019,calder2019} are \emph{asymptotic} statements about the $n\to\infty$ continuum limit at
fixed labels, whereas Table~\ref{tab:acc} reports finite-$n$ ($n=10^4$) accuracy. At finite $n$ the
$p=2$ estimate is not yet fully collapsed, and a moderate $p$ already lifts it out of the near-degenerate
regime; pushing $p$ toward and beyond the intrinsic $d$ trades the well-posedness benefit against
increased smoothing bias, so the accuracy optimum sits well below the asymptotic threshold. In sum, the
theory motivates \emph{why non-quadratic $p$ helps at low labels}; it does not pin the accuracy-optimal
$p$, which is set by finite-$n$ bias--variance. This actually helps our solver: the useful $p$ is
moderate and reached by a small number of continuation steps.

\subsection{Independent validation}
The effect is not an artifact of our graph construction. Calder's \texttt{GraphLearning} package~\cite{calder2020}, using a different similarity graph (variational-autoencoder embeddings, Gaussian
$10$-NN kernel), reproduces the same collapse-and-recovery: at one label per class, Laplace ($p=2$)
learning scores $22.4\%$ on MNIST and $16.9\%$ on FashionMNIST, while its $p$-Laplace ($p=3$) solver
recovers to $71.6\%$ and $55.8\%$ respectively. Both the degeneracy and its non-quadratic fix are
solver- and graph-independent.

\subsection{The fix, demonstrated at web scale}
The value of a scalable solver is that the same statistical fix can now be run where the graph is large
and the labels few---precisely the industrial regime. On \texttt{ogbn-products}~\cite{ogb} (Amazon
co-purchasing, largest component $2.39\times10^6$ nodes, $6.18\times10^7$ edges, $47$ classes) we solve
a full $p=3$ SSL propagation in $527$ seconds on a laptop, and compare $p=2$ against $p=3$ on the three
largest classes at five labels per class, using \emph{graph structure alone} (no node features). The
feature-free setting is deliberate and is the correct one for our question: node features would let a
classifier partly bypass propagation, confounding the pure graph-SSL effect we test, whereas the
low-label industrial regime we target (few-shot labeling on a large interaction graph) is exactly where
features are scarce or absent and label propagation carries the signal. Quadratic propagation scores
$21.5\%$---even below the $33\%$ three-class random line, while $p=3$ recovers to $37.1\%$ ($+15.6$
points). This demonstrates the fix using our solver at a scale where the direct-factorization incumbents
cannot form the factor at all (\S\ref{sec:scaling}); we make no claim to beat feature-based GNNs here,
only that nonlinear propagation is both necessary (quadratic fails) and now feasible at this size.

\begin{figure}[t]\centering
\includegraphics[width=\textwidth]{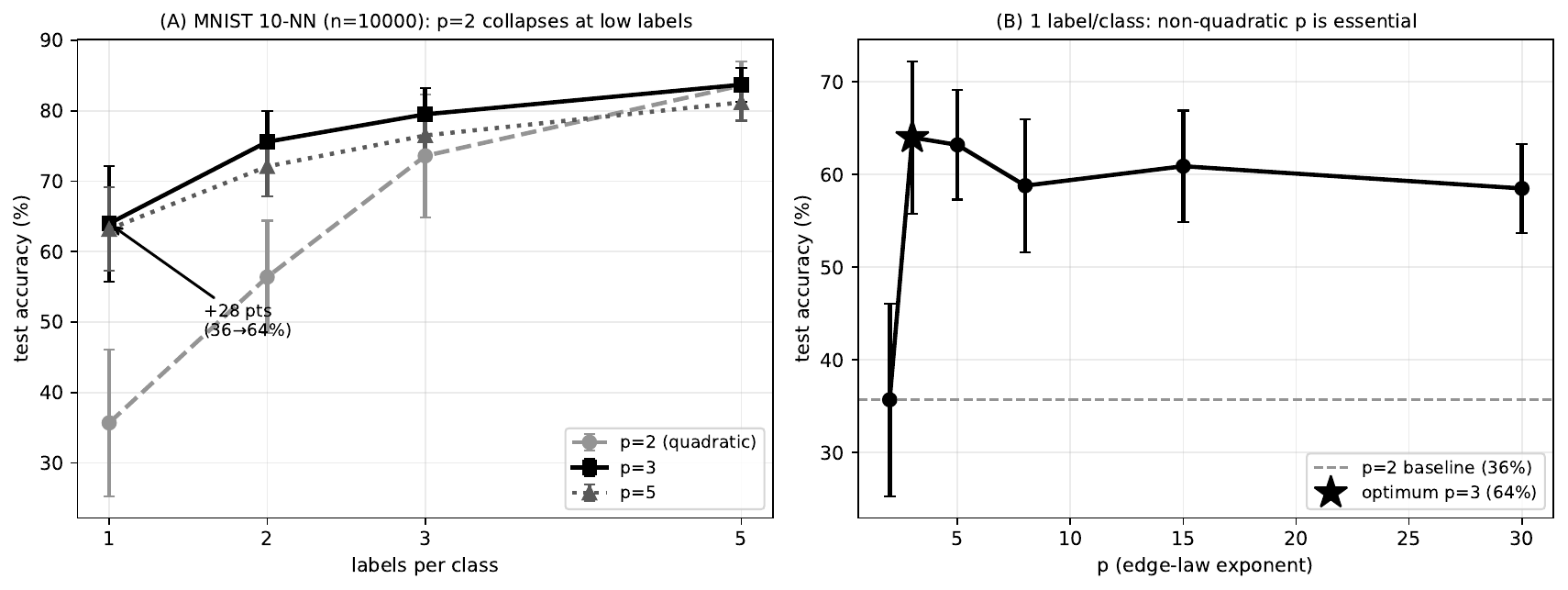}
\caption{Low-label degeneracy fix on the MNIST $10$-NN graph. \textbf{(A)} Test accuracy vs labels per
class: the quadratic $p=2$ estimate (circles, dashed) collapses at one label/class where $p=3$ (squares, solid) recovers.
\textbf{(B)} Accuracy vs $p$ at one label/class: a moderate non-quadratic $p$ is essential; the optimum
is $p\approx3$.}\label{fig:acc}
\end{figure}

\section{Near-linear scaling}\label{sec:scaling}
We run over a corpus of 228 real graphs from the SuiteSparse collection. The corpus is all graphs from
the genuine data-similarity families (DIMACS10, SNAP, Newman, Pajek, ML\_Graph, Gleich, Arenas,
Barabasi, LAW, Butterfly) with $10^3$--$1.7\times10^7$ edges, excluding census (\texttt{*2010}) and VLSI
(\texttt{vsp\_*}) matrices, which are not similarity graphs; we take each graph's largest connected
component with symmetrized weights.\footnote{The exact graph list is pinned in the repository
(\texttt{scripts/exp\_corpus\_final.jl}).} On each we plant a handful of random $\pm1$ seeds and solve
\eqref{eq:penergy} at the target $p=3$ by continuation from $p=2$, with the near-linear inner engine
(AC); we repeat the sweep with LAMG$+$ to confirm the two are interchangeable, and
compare against the incumbent inner solve---a fresh sparse Cholesky factorization at every Newton
linearization. All $228$ converge (\S\ref{sec:cont}); scaling fits use the $m\ge10^3$ graphs.

\subsection{Controlled scaling: $\Om$ at fixed graph type}
The right way to isolate size scaling is to hold the graph type fixed and vary the size. We
sweep two \emph{families}---Delaunay triangulations ($2^{14}$--$2^{20}$ nodes) and random geometric
graphs ($2^{15}$--$2^{19}$ nodes)---each over $\ge5$ sizes (Table~\ref{tab:family}). With the LAMG$+$
engine the wall-clock scales as $m^{1.02}$ (Delaunay, $R^2{=}0.999$) and $m^{0.96}$ (RGG,
$R^2{=}0.997$), with $10$--$14$ chord-Newton steps at every size. AC is marginally steeper ($m^{1.10}$, $m^{1.18}$). This only
certifies $\Om$ at fixed graph type. Fitting one power law across the heterogeneous corpus below
gives a larger slope ($m^{1.19}$), but that is an artifact of aggregating distinct graph classes
with different per-solve constants (a larger, harder class dominates the high-$m$ end and steepens
the aggregate line)---not super-linear scaling of the algorithm on any fixed class. We verified this is
not simply a density effect: controlling for edge density in a multiple regression leaves the corpus size
coefficient unchanged, whereas restricting to any single family recovers $m^{0.96}$--$m^{1.02}$.

\begin{table}[t]\centering
\caption{Controlled size-scaled families (same graph type, doubling size), $p=3$ SSL wall-clock (s).
LAMG$+$ scales as $m^{0.96}$--$m^{1.02}$ (per family; $R^2\ge0.997$)---genuine $\Om$; the chord-Newton step count is flat.
The fitted exponents (LAMG$+$): Delaunay $m^{1.02}$ ($R^2{=}0.999$, $7$ points), RGG $m^{0.96}$
($R^2{=}0.997$, $5$ points); AC is marginally steeper ($m^{1.10}$, $m^{1.18}$).
Times are from the dedicated warm+solo controlled-family run
(\texttt{results/family\_scaling.txt}); at sub-second sizes the per-run warmup constant differs mildly
from the corpus run of Fig.~\ref{fig:scaling} without affecting the fitted exponents.}
\label{tab:family}
\begin{tabular}{l r r r r}
\toprule
family & $m$ (edges) & approx.\ Chol.\ (s) & LAMG$+$ (s) & steps\\
\midrule
Delaunay $n{=}14$ & $49{,}122$    & $0.22$ & $0.52$ & $10$\\
Delaunay $n{=}15$ & $98{,}274$    & $0.44$ & $1.09$ & $11$\\
Delaunay $n{=}16$ & $196{,}575$   & $0.98$ & $2.33$ & $12$\\
Delaunay $n{=}17$ & $393{,}176$   & $2.09$ & $4.34$ & $12$\\
Delaunay $n{=}18$ & $786{,}396$   & $4.00$ & $8.04$ & $11$\\
Delaunay $n{=}19$ & $1{,}572{,}823$ & $10.3$ & $18.5$ & $14$\\
Delaunay $n{=}20$ & $3{,}145{,}686$ & $20.5$ & $37.8$ & $13$\\
\midrule
RGG $n{=}15$ & $160{,}235$   & $0.54$ & $1.09$ & $13$\\
RGG $n{=}16$ & $342{,}127$   & $1.16$ & $2.21$ & $10$\\
RGG $n{=}17$ & $728{,}750$   & $3.41$ & $5.19$ & $13$\\
RGG $n{=}18$ & $1{,}547{,}283$ & $7.89$ & $10.3$ & $13$\\
RGG $n{=}19$ & $3{,}269{,}760$ & $17.5$ & $18.7$ & $13$\\
\bottomrule
\end{tabular}
\end{table}

\begin{figure}[t]\centering
\includegraphics[width=\textwidth]{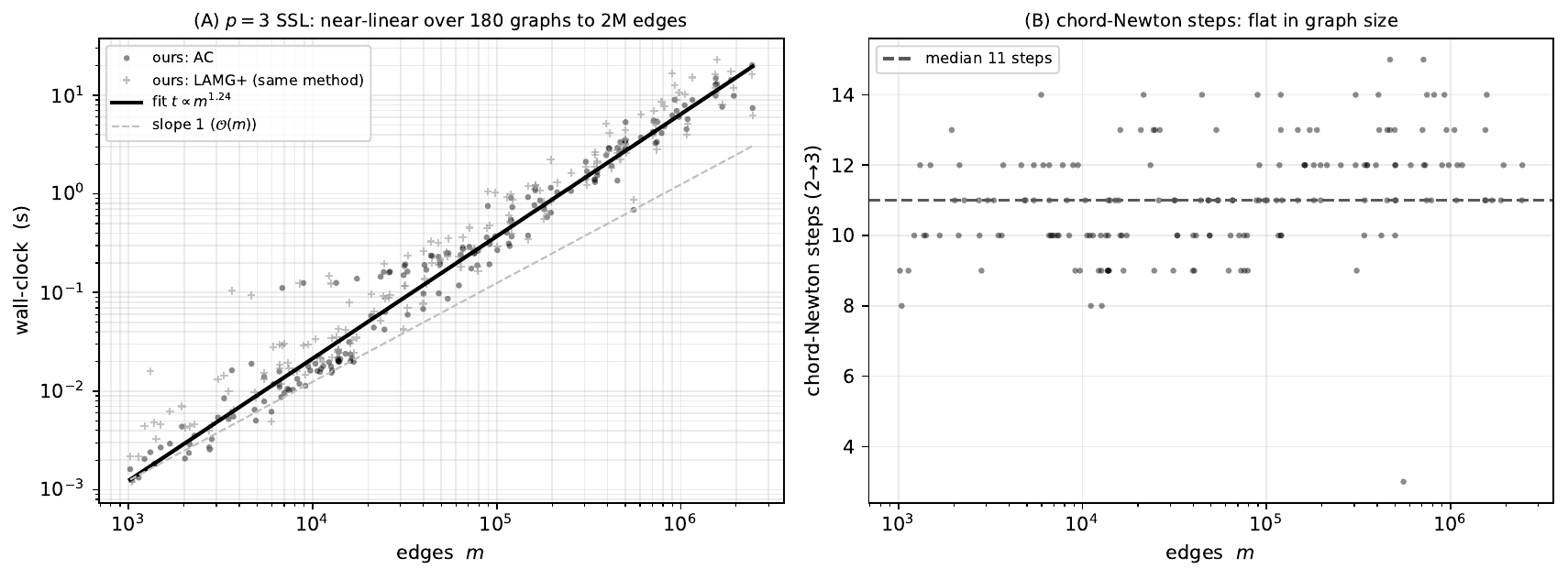}
\caption{\textbf{(A)} Our $p=3$ SSL wall-clock vs edges $m$, log--log, spanning nearly four decades to
the $6.8\times10^7$-edge \texttt{soc-LiveJournal1} (star). AC (filled circles) and LAMG$+$
($+$ markers) are near-linear ($m^{1.19}$ over the corpus; $m^{0.96}$--$m^{1.02}$ on the controlled families,
Table~\ref{tab:family}); the incumbent IRLS+direct (squares) turns superlinear.
\textbf{(B)} Chord-Newton step count (circles) is flat in $m$; dashed line marks the median.}\label{fig:scaling}
\end{figure}

\begin{figure}[t]\centering
\includegraphics[width=0.82\textwidth]{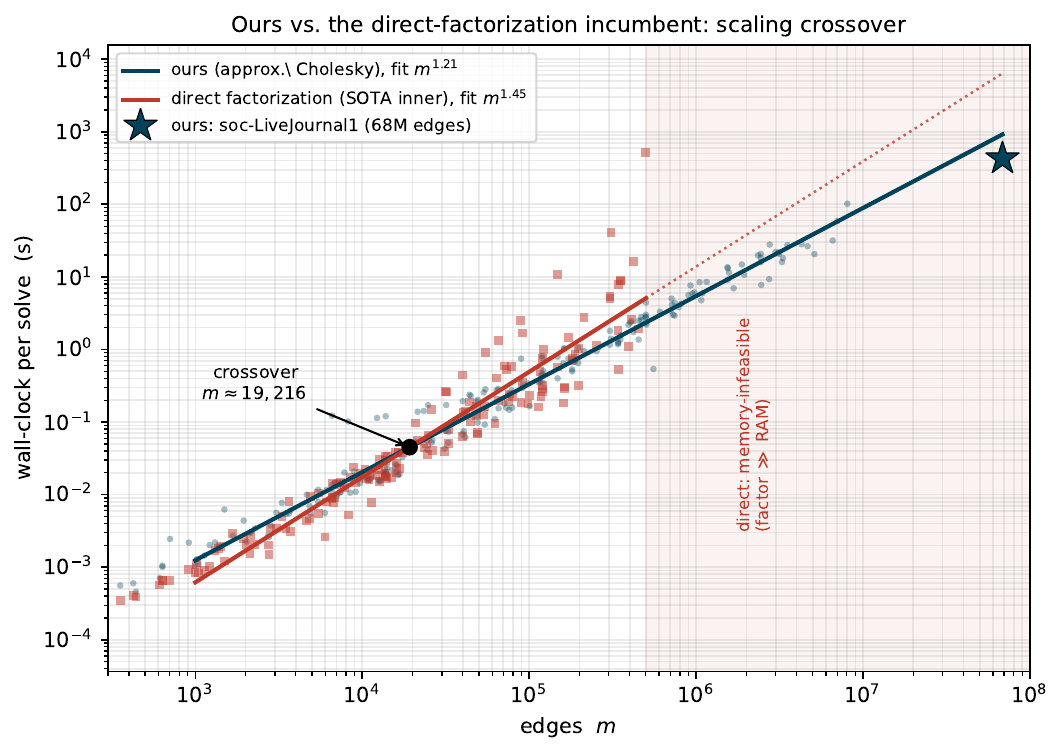}
\caption{Like-for-like total wall-clock per graph, our default (AC, teal) vs the
direct-factorization incumbent (red), solving the same problem to the same tolerance, over the corpus. The
fitted lines cross at $m\approx1.9\times10^4$: direct is cheaper on small/well-separated graphs but its
cost grows super-linearly with fill ($m^{1.45}$ vs our $m^{1.19}$). The wide vertical spread of the red
points at fixed $m$ is graph-type dependence (geometric cheap, irregular explosive; cf.\
Table~\ref{tab:crossover}). Past the shaded region the direct factor exceeds RAM and is infeasible,
while ours reaches the $68$M-edge giant (star).}\label{fig:crossover}
\end{figure}

Over the full heterogeneous corpus (Figure~\ref{fig:scaling}, spanning nearly four decades of edge
count) the wall-clock fits $t\propto m^{1.19}$ (least-squares
over the converged graphs with $m\ge10^3$). The chord-Newton step count is flat in $m$---median
$11$, $\le50$ over the corpus, fitted $\text{steps}\propto m^{0.03}$---so the outer iteration is
$\mathcal O(1)$ and the residual excess over $m^1$ comes from the aforementioned near-linear inner solve's per-edge cost increase.
With the guarded Anderson acceleration (\S\ref{sec:cont}) convergence holds on \textbf{all $228$ corpus
graphs}: it rescues the one previously stalling case (an extreme power-law autonomous-system graph that
without acceleration crawls at linear rate ${\approx}0.99$ and does not converge within the step budget,
and with it converges in $50$ steps) without altering any other solve: by construction the energy guard
accepts a mixed step only when the energy does not increase, so acceleration can only lower the step
count, never raise it, and on the remaining $227$ graphs---which already converge in a median of $11$
steps---it changes nothing. Table~\ref{tab:expdecomp} decomposes the
measured exponent by factor and engine.

\begin{table}[t]\centering\small
\caption{Where the wall-clock exponent comes from. The outer iteration is $\mathcal{O}(1)$ (flat step
count) and the $p$-Laplacian reweighting is minor (a single clean $p=2$ solve over the same corpus
already fits $m^{1.14}$; the full $p=3$ continuation adds only ${\approx}0.05$). The residual excess over
$m^1$ is the near-linear inner solve's per-edge cost: LAMG+ is empirically $\Om$ at fixed graph type,
while AC carries its provable $\mathcal{O}(m\log^{3}n)$ factor. Within a fixed graph
family the scaling is (near-)linear; the larger pooled-corpus exponent is a mixed-class aggregate, not
super-linear scaling of the algorithm.}\label{tab:expdecomp}
\begin{tabular}{lll}
\toprule
factor & scope & fitted exponent\\
\midrule
outer chord-Newton steps      & vs.\ $m$, corpus        & $m^{0.03}$ \ (flat, $\mathcal{O}(1)$)\\
$p$-Laplacian reweighting     & $p{=}2\!\to\!p{=}3$, corpus & $m^{1.14}\!\to\!m^{1.19}$ \ ($+0.05$)\\
inner solve, LAMG+            & per family              & $m^{0.96}$--$m^{1.02}$ \ (empirical $\Om$)\\
inner solve, AC               & per family              & $m^{1.10}$--$m^{1.18}$ \ ($\mathcal{O}(m\log^{3}n)$)\\
pooled corpus fit             & 228 graphs              & $m^{1.19}$ \ (mixed-class)\\
\bottomrule
\end{tabular}
\end{table}

\subsection{Inner engine ablation}
The two engines give near-identical iteration counts (mean per-graph difference $\approx\!1.6$ steps) and
interchangeable solutions, so the choice is purely one of inner-solver constants. Empirically AC is the faster practical choice:
it is $1.6\times$ faster than LAMG$+$ in median wall-clock and quicker on $86\%$ of the corpus, holding
a $1.3\times$ edge even above $10^6$ edges. LAMG$+$, on the other hand, has the cleaner exponent
($m^{0.96}$--$m^{1.02}$ per family, mean $m^{0.99}$, vs AC's $m^{1.10}$--$m^{1.18}$), so its larger constant is eventually amortized and
it would overtake at some scale beyond those we test. We therefore default to AC for
practice; the method itself is agnostic to the choice of near-linear Laplacian engine.

\subsection{Comparison with the incumbent solvers}\label{sec:sota}
The state-of-the-art variational $p$-Laplacian solvers reduce the nonlinear problem to a sequence of
reweighted Laplacian solves; their released implementations differ in how they perform that inner solve.
The IRLS and dual-IRLS variants (Flores--Calder--Lerman's \texttt{irls\_solve}, Storn) use a
direct sparse Cholesky factorization; FCL's Newton-with-homotopy---the method the same
authors recommend as the practical incumbent---uses an incomplete-Cholesky-PCG. We
compare against both: the direct variant (a same-code inner-solver swap that exposes its memory
wall, \S\ref{sec:crossover}), and FCL's released Newton code head-to-head in \S\ref{sec:fcl}.

\subsubsection{Scaling crossover}\label{sec:crossover}
To fairly compare against the direct variant, we fix the
problem, the outer continuation, the $p$-schedule, and the convergence tolerance ($10^{-6}$), and swap
only the inner Laplacian solve to an exact sparse Cholesky. Both variants converge to the same
SSL solution (the engines are interchangeable to $\sim\!10^{-10}$, \S\ref{sec:cont}); the realized
Newton-step counts differ by a few (mean per-graph difference $\approx\!3$ steps) because the near-linear inner solve is
inexact, so we compare total wall-clock to solve the same problem to the same accuracy (not a per-step cost). Every solve is run solo and warm-timed.

Figure~\ref{fig:crossover} plots both against edge count over the corpus. The direct factorization is
faster on small and well-separated graphs---its constant is a highly tuned library---but its per-solve
cost grows super-linearly with fill ($m^{1.45}$ fitted) whereas ours stays near-linear ($m^{1.19}$), so
the two fitted lines \emph{cross} at $m\approx1.9\times10^4$: beyond a few $\times10^4$ edges our
near-linear engine is faster on average, and past the memory wall the direct factor no longer fits
in RAM at all while ours runs to the $68$M-edge giant. The wide vertical spread of the direct points at
fixed $m$ is the real content: direct's cost is set by graph type, not size.

Table~\ref{tab:crossover} shows this in detail. At a fixed size ($\sim\!2$--$5\times10^5$ edges) the
direct solve swings over three orders of magnitude in wall-clock---$0.3$\,s on a road or Delaunay mesh
(good separators, near-linear fill) up to $519$\,s on a scale-free graph with a $568$M-nonzero factor
($\approx9$\,GB)---while ours stays a few seconds regardless. On geometric graphs direct wins; on
exactly the irregular kNN/social/scale-free graphs that SSL is built for, ours is already
$1.6$--$173\times$ faster by a few hundred thousand edges, and then the factorization runs out of memory. We do not claim to beat a tuned direct solve where it fits; but we show it does not work on the graphs that matter, and that the crossover is already at $m\sim10^4$. The root cause is fill: on the irregular graphs SSL uses the Cholesky factor grows as $\sim\!\mathrm{nnz}^{1.57}$ within a graph family and reaches $10$--$280\times$ the graph's nonzeros (Table~\ref{tab:fill}), versus our $\mathcal{O}(m)$ hierarchy.

\begin{table}[t]\centering
\caption{Total wall-clock to solve the \emph{same} SSL problem to the same tolerance: the
direct-factorization incumbent vs our default (AC), same outer continuation,
warm-timed solo. Direct's cost is governed by fill: cheap on graphs with good vertex separators,
catastrophic on the irregular kNN/social/scale-free graphs SSL uses, and ultimately infeasible
(factor $\gg$ RAM). Speedup ${>}1$ favors ours. The largest direct solve we time is
\texttt{prefAttachment} ($519$\,s, $568$M-nonzero factor, ${\approx}9$\,GB). $^{\dagger}$Factor
\emph{measured} by an actual sparse Cholesky (Table~\ref{tab:fill}): $508$M nonzeros, ${\approx}8.1$\,GB;
we report no solo wall-clock because the peak factorization working set (several times the final factor)
makes a clean solo timing impractical, so ``ours'' already wins outright. $^{\ddagger}$Factor not formed;
its size is \emph{extrapolated} via the within-family fill law
$\mathrm{nnz}(\text{factor})\!\sim\!\mathrm{nnz}^{1.57}$ (fitted on the graphs that do factorize) to
terabyte scale---genuinely infeasible.}
\label{tab:crossover}
{\small\setlength{\tabcolsep}{4pt}%
\begin{tabular}{l l r r r r}
\toprule
graph & type & $m$ & direct (s) & ours (s) & speedup\\
\midrule
\texttt{usroads}       & road mesh (good sep.) & $162$K & $0.30$ & $1.02$ & $0.3\times$\\
\texttt{delaunay\_n16} & mesh (good sep.)      & $197$K & $0.32$ & $1.06$ & $0.3\times$\\
\midrule
\texttt{ca-AstroPh}    & collaboration             & $197$K & $1.03$ & $0.65$ & $1.6\times$\\
\texttt{loc-Brightkite}& social                    & $213$K & $2.75$ & $0.95$ & $2.9\times$\\
\texttt{k49\_10NN}     & kNN similarity            & $309$K & $41.2$ & $1.41$ & $29\times$\\
\texttt{prefAttachment}& scale-free                & $500$K & $519$  & $3.01$ & $\mathbf{173\times}$\\
\midrule
\texttt{amazon0302}    & co-purchase               & $0.9$M & $8.1$\,GB factor$^{\dagger}$ & $5.61$ & ---\\
\texttt{soc-LiveJournal1} & social                 & $68$M  & \emph{infeas.}$^{\ddagger}$    & $429$  & $\infty$\\
\bottomrule
\end{tabular}}
\end{table}

\subsubsection{Head-to-head against the released FCL solver}\label{sec:fcl}
The sharpest test is to run the incumbents' \emph{own} released code. We took the FCL variational
Newton-with-homotopy solver~\cite{fcl2022} (\texttt{nt\_solve\_newton}, MATLAB/Octave, ichol-
PCG inner solve, continuation in $p$) and ran it, unmodified, against ours on \emph{identical} kNN
similarity graphs---the exact setting FCL targets---with the same labeled set and $p$-target. Because
both solve the same variational operator, they reach the same classification: binary accuracies are
identical (to three figures) on every case except Fashion at $p{=}8$, where they differ by one node in
$2{,}500$; the solution fields agree to $\|\cdot\|$-relative $10^{-3}$ at $p{=}3,5$ (and to $\sim\!2\text{--}3\times10^{-2}$ at $p{=}8$, where the near-flat field is not tightly pinned by a $10^{-8}$ residual). The wall-clock ratio is thus an honest solver-to-solver comparison at matched output.

In Table~\ref{tab:fcl}, our solver is faster in every case, and the margin
grows with graph size and with $p$: from $1.5\times$ on MNIST ($73$K edges) to
$14\times$ on \texttt{k49} ($p{=}5$, $309$K edges). One caveat on the absolute factors: ours is in Julia
and FCL's released code is MATLAB/Octave, so a constant part of the ratio is implementation and
language, not algorithm---the $1.5\times$ on small MNIST is within that gray zone and we do not lean on
it. What is \emph{not} a language effect is the \emph{growth} of the margin, and its mechanism is
visible in the iteration counts, which are language-independent: FCL's incomplete-Cholesky
preconditioner degrades on the higher-degree, stiffer operators that larger $p$ and irregular kNN graphs
produce (\texttt{k49} has max degree $551$), so its inner CG iteration count climbs, whereas our
near-linear inner solve holds a near-constant cost. The two solvers take a comparable number of
\emph{outer} Newton steps (columns 6--7).

\begin{table}[t]\centering
\caption{Head-to-head against the \emph{released} FCL Newton solver~\cite{fcl2022} (its own
ichol+CG code, run under Octave) on identical kNN graphs, same labels, same $p$-target reached by
continuation. Both solve the same variational operator to the same classification (agreement column:
binary accuracy where ground truth exists---identical except Fashion $p{=}8$---else relative
$\|u_{\mathrm{FCL}}-u_{\mathrm{ours}}\|$). Median of $3$ warm, solo runs. Ours (AC)
is faster throughout, by a margin that grows with size and $p$.}
\label{tab:fcl}
{\small\setlength{\tabcolsep}{5pt}%
\begin{tabular}{l r r r r r r c}
\toprule
graph ($m$ edges) & $p$ & ours (s) & FCL (s) & speedup & ours it. & FCL it. & agreement\\
\midrule
MNIST $10$-NN ($73$K)   & $3$ & $0.33$ & $0.52$ & $1.6\times$ & $12$ & $11$ & acc $0.978$\\
MNIST $10$-NN ($73$K)   & $5$ & $0.64$ & $0.99$ & $1.5\times$ & $20$ & $23$ & acc $0.954$\\
MNIST $10$-NN ($73$K)   & $8$ & $0.88$ & $1.36$ & $1.5\times$ & $28$ & $29$ & acc $0.934$\\
\midrule
Fashion $10$-NN ($79$K) & $3$ & $0.35$ & $1.37$ & $3.9\times$ & $12$ & $11$ & acc $0.434$\\
Fashion $10$-NN ($79$K) & $5$ & $0.93$ & $2.57$ & $2.8\times$ & $26$ & $24$ & acc $0.388$\\
Fashion $10$-NN ($79$K) & $8$ & $0.82$ & $3.85$ & $4.7\times$ & $30$ & $31$ & acc $0.385$\\
\midrule
\texttt{k49} $10$-NN ($309$K) & $3$ & $1.84$ & $23.5$ & $\mathbf{13\times}$  & $12$ & $12$ & $2.4\times10^{-3}$\\
\texttt{k49} $10$-NN ($309$K) & $5$ & $3.33$ & $47.3$ & $\mathbf{14\times}$  & $22$ & $24$ & $1.0\times10^{-3}$\\
\bottomrule
\end{tabular}}
\end{table}

\subsection{A cross-operator third-party point}
As a complementary data point across a \emph{different} operator, we also time Calder's
\texttt{GraphLearning}~\cite{calder2020}. It sidesteps the memory wall: it solves the game-theoretic
$p$-Laplacian by a matrix-free fixed-point iteration, so it never forms a factor; but it pays for that
in additional iterations. On identical graphs it is $5.5\times$ slower on \texttt{delaunay\_n16}, $23\times$ slower
on \texttt{delaunay\_n18} ($91$\,s vs our $4.0$\,s), and $6.6\times$ slower on \texttt{rgg\_n\_2\_17},
because its $\infty$-Laplacian sweep propagates slowly across high-diameter meshes; on low-diameter
social graphs (\texttt{ca-AstroPh}) it is competitive. This is a cross-\emph{operator} comparison (the
game-theoretic and variational $p$-Laplacians differ), so we read it not as a like-for-like solver race
but as evidence that the two existing routes to scalable graph $p$-Laplacian learning---matrix-free
game-theoretic iteration, or our near-linear variational solve---trade off, and ours is markedly faster
on exactly the geometric graphs where the fixed point stalls.

\begin{table}[t]\centering
\caption{Sparse-Cholesky fill (the direct incumbent's memory) relative to the graph's nonzeros
($\mathrm{nnz}=2m$), for the $p=2$ Laplacian. Near-linear ($3$--$7\times$) on graphs with good
separators; super-linear on the irregular graphs SSL uses, making the direct factorization
memory-infeasible at scale. Our hierarchy memory is $\mathcal{O}(m)$.}\label{tab:fill}
\begin{tabular}{l r r}
\toprule
graph & nnz & fill\,/\,nnz\\
\midrule
Delaunay $n{=}18$ (geometric) & $1.6$M & $7.0$\\
RGG $n{=}18$ (geometric)      & $3.1$M & $3.8$\\
\midrule
MNIST $10$-NN (kNN similarity) & $146$K & $42.2$\\
\texttt{ca-AstroPh} (collaboration) & $396$K & $20.8$\\
\texttt{loc-Brightkite} (social) & $428$K & $42.4$\\
\texttt{amazon0302} (co-purchase) & $1.8$M & $282.7$\\
\bottomrule
\end{tabular}
\end{table}

\subsection{Web scale}
On the far right of Figure~\ref{fig:scaling}(A), our solver computes a full $p=3$ SSL equilibrium on
\texttt{soc-LiveJournal1} (largest component $4.8\times10^6$ nodes, $\mathbf{6.8\times10^7}$
\textbf{edges}) in \textbf{$429$ seconds} on a laptop, in $12$ chord-Newton steps---the largest instance
in our study, and one whose direct factorization is infeasible for the memory reason above. The other
classical route, \emph{primal} IRLS, does not provide a converging baseline at $p=3$: it diverges for
$p\ge3$~\cite{storn2026}, which we confirm (non-convergence on every corpus graph at $p=3$)---exactly why
the incumbents adopt homotopy or a dual reformulation. (At $p<3$, where primal IRLS does converge, it
reweights and re-solves the \emph{same} direct-factorization Laplacian and so inherits the identical
memory wall; the divergence at $p=3$ only removes it as a baseline where we most want one.) Nothing
prevents those methods from adopting a near-linear inner solve; the point is that none has, and casting
$p$-Laplacian SSL as an NLF flow makes it immediate. That step is not a mere solver swap: existing solvers already produce weighted Laplacians at each
outer step, but at large $p$ the conductance weights degenerate near flat-gradient edges, and
without a damped outer iteration an inexact near-linear inner solve stagnates. The NLF framework
delivers both: the damped chord-Newton continuation that controls conditioning, and the
near-linear inner engine---AC, or the empirically-$\Om$ LAMG$+$---that exploits it. A near-linear inner engine, not a new outer method, is what carries $p$-Laplacian learning
to this scale.

\section{Limitations}\label{sec:limits}
Three caveats.
(1)~\emph{Scaling is empirical}: we prove no complexity bound; the $\Om$ claim holds at fixed graph type
(Table~\ref{tab:family}) and rests on the empirical near-linearity of the inner solver.
(2)~\emph{Convergence is empirical}: Anderson acceleration restores convergence on all $228$ corpus
graphs but carries no step-count bound; a pathological conductance spread could still require more
iterations.
(3)~\emph{Head-to-head comparison is limited to moderate sizes}: the FCL benchmark (\S\ref{sec:fcl})
reaches $3\times10^5$ edges, the limit of that MATLAB/Octave implementation; at larger scales no
released variational solver runs, so the comparison reverts to the memory-wall argument.
Two further, minor points: the reduction adds $|S|$ mass edges and one node (negligible in the low-label
regime), and the implementation is single-threaded---distributed scaling to the billion-edge graphs of
\S\ref{sec:intro} is future work.

\section{Conclusion}\label{sec:concl}
Graph $p$-Laplacian SSL fixes the low-label collapse of quadratic label propagation but has been penned
in by a direct-factorization inner solve. The key insight is that a near-linear inner solver cannot
simply be dropped in: the damped chord-Newton continuation is what keeps each linearized system
well-conditioned, and only then can a near-linear engine (AC or LAMG$+$) replace the factorization
without stagnation---neither ingredient works without the other. Casting $p$-Laplacian SSL as a
nonlinear Laplacian flow makes this combination immediate, removes the scalability ceiling, and yields
a solver that is empirically $\Om$, matches the accuracy the theory promises, and reaches graphs well
beyond the reach of the factorization-based incumbents. The natural next steps are the $p\to\infty$
(Lipschitz-learning) limit through the same continuation, and pushing to billion-edge web graphs where
only a near-linear solver can follow.

\subsection{Reproducibility}
All code, scripts, and the real-graph corpus construction are at \url{https://github.com/orenlivne/np};
the solver core is the Julia NLF package~\cite{nlf}, with AC provided by
\texttt{Laplacians.jl}. Experiments ran in a single Julia process, single-threaded with no explicit
parallelism, on a MacBook Pro (Apple M5 Pro, 18-core, 48\,GB unified memory).

\appendix
\section{Self-contained convergence of the outer solver}\label{app:conv}
This appendix proves that the outer iteration of \S\ref{sec:cont}---damped chord-Newton with the
$\varepsilon$-floored Hessian and the guarded Anderson step---converges globally to the unique SSL
solution, using only the strict convexity established in Proposition~\ref{prop:reduce}. Nothing here
relies on the convergence analysis of~\cite{nlf}.

\emph{Setting.} On the connected augmented graph we minimize the strictly convex, coercive energy
$E(x)=\sum_e\Phi_e((\hat B^\top x)_e)-b^\top x$, $\Phi_e'=\hat\rho_e$ (Proposition~\ref{prop:reduce}),
whose unique minimizer $x^\star$ (fixed by $x_{g_0}=0$) satisfies $\nabla E(x^\star)=0$. The iteration
computes $\delta_k$ from $J_k\delta_k=-r_k$ inexactly, then sets $x_{k+1}=x_k+\tau_k\delta_k$, where
$r_k=\nabla E(x_k)$, $\tau_k\in(0,1]$ is the Armijo backtracking step on $E$, and
$J_k=\hat B\,\mathrm{diag}(\sigma_k)\,\hat B^\top$ is the \emph{floored} Hessian with
$\sigma_{k,e}=\max(\hat\rho_e'(\hat g_{k,e}),\,\varepsilon w_{\max})$. Note $J_k$ is \emph{not} the exact
Hessian $\nabla^2E(x_k)$ where the floor is active; the analysis uses only that $J_k$ is uniformly SPD,
so the floor changes the convergence \emph{rate} (\S\ref{sec:cont}), not the guarantee.

\begin{lemma}[Uniform positive definiteness]\label{lem:spd}
On the gauge quotient $\{\mathbf 1^\top x=0\}$, $J_k\succeq\varepsilon w_{\max}\hat L\succ0$, where
$\hat L$ is the unweighted Laplacian of the connected augmented graph. Hence $J_k$ is SPD with least
eigenvalue at least $\varepsilon w_{\max}\lambda_2(\hat L)>0$, uniformly in $k$.
\end{lemma}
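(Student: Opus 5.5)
The plan is to compare quadratic forms edge by edge. For any $v$, write
\[
v^\top J_k v=\sum_e \sigma_{k,e}\,\big((\hat B^\top v)_e\big)^2 .
\]
By the definition of the floor, $\sigma_{k,e}=\max(\hat\rho_e'(\hat g_{k,e}),\varepsilon w_{\max})\ge\varepsilon w_{\max}$ for every edge $e$ and every $k$. This holds both for graph edges, where $\hat\rho_e'=w_e\rho_p'\ge0$ by \eqref{eq:plaw}, and for mass edges, where $\hat\rho_e'=\mu_e>0$ already. We therefore obtain termwise
\[
v^\top J_k v\ge\varepsilon w_{\max}\sum_e((\hat B^\top v)_e)^2=\varepsilon w_{\max}\,v^\top\hat L v,
\]
with $\hat L=\hat B\hat B^\top$ the unweighted Laplacian of the augmented graph. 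This gives $J_k\succeq\varepsilon w_{\max}\hat L$ as quadratic forms on all of $\R^{n+1}$, and hence in particular on the quotient. The bound does not depend on $x_k$, so it is uniform in $k$.

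The second step is positive definiteness of $\hat L$ on $\{\mathbf 1^\top x=0\}$. I would invoke the connectivity of the augmented graph established in the proof of Proposition~\ref{prop:reduce}: the similarity graph is restricted to its largest component, and the ground node is attached through the mass edges. Standard spectral graph theory then says the kernel of $\hat L$ is exactly $\mathrm{span}(\mathbf 1)$. The argument is that $v^\top\hat Lv=0$ forces $v_i=v_j$ across every edge, so $v$ is constant on the single component. Consequently $\hat L$ restricted to $\mathbf 1^\perp$ has least eigenvalue $\lambda_2(\hat L)>0$, the algebraic connectivity. Combining the two steps, for $v\perp\mathbf 1$,
\[
v^\top J_kv\ge\varepsilon w_{\max}\lambda_2(\hat L)\|v\|^2 .
\]
The uniform lower bound follows. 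I also need $J_k$ to map the quotient to itself, which holds because $\mathbf 1^\top J_k=0$.

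The main obstacle is only bookkeeping. First, the floor must really be applied to every edge, mass edges included; if it were applied only to graph edges, the bound would still hold with $\min(\varepsilon w_{\max},\min_e\mu_e)$ in place of $\varepsilon w_{\max}$ on the mass edges. Second, the quotient statement must be made precise: $J_k$ is only positive semidefinite on $\R^{n+1}$, with $\mathbf 1$ in its kernel, so "SPD" is meant on $\mathbf 1^\perp$, or equivalently after grounding $x_{g_0}=0$. In the grounded version, the Schur/principal submatrix deleting row and column $g_0$ is SPD by the same bound together with irreducibility of the connected graph.
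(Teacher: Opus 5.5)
Your proposal is correct and takes the same route as the paper: the paper writes $J_k-\varepsilon w_{\max}\hat L=\hat B\,\mathrm{diag}(\sigma_k-\varepsilon w_{\max})\,\hat B^\top\succeq0$, which is your termwise quadratic-form comparison, and it gets $\lambda_2(\hat L)>0$ from the connectivity established in Proposition~\ref{prop:reduce}. Your extra bookkeeping is sound and only makes explicit what the paper leaves implicit: the floor covering mass edges, $\mathbf 1^\top J_k=0$ so that $J_k$ acts on $\mathbf 1^\perp$, and the equivalent grounded-submatrix formulation.
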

\begin{proof}
Every $\sigma_{k,e}\ge\varepsilon w_{\max}$, so
$J_k-\varepsilon w_{\max}\hat L=\hat B\,\mathrm{diag}(\sigma_k-\varepsilon w_{\max})\,\hat B^\top\succeq0$.
The augmented graph is connected (Proposition~\ref{prop:reduce}), so $\lambda_2(\hat L)>0$.
\end{proof}

\begin{proposition}[Global convergence]\label{prop:conv}
Fix a forcing tolerance $\eta\in[0,1)$ and require each inner solve to return $\delta_k$ with
$\|J_k\delta_k+r_k\|\le\eta\|r_k\|$ (fixed relative accuracy, which approximate Cholesky and LAMG$+$
deliver). Then each $\delta_k$ is a descent direction for $E$, the energies $E(x_k)$ decrease
monotonically, and $x_k\to x^\star$.
\end{proposition}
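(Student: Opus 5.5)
The plan is the classical globalization argument for inexact Newton-type methods with a line search: sublevel-set compactness, then a uniform angle condition, then Zoutendijk's theorem, then uniqueness of the minimizer. Everything is done on the gauge quotient $\{\mathbf 1^\top x=0\}$, equivalently with $x_{g_0}=0$ pinned, where Lemma~\ref{lem:spd} applies.

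\emph{Step 1: compact sublevel set and two-sided bounds on $J_k$.} By Proposition~\ref{prop:reduce}, $E$ is strictly convex and coercive on the quotient. Hence the sublevel set $\mathcal L_0=\{x:E(x)\le E(x_0)\}$ is compact. If the energies decrease monotonically (shown in Steps 2--3), every iterate lies in $\mathcal L_0$. On $\mathcal L_0$ the edge differences $\hat g$ are bounded, so for $p\ge2$ the conductances $\hat\rho'_e(\hat g_e)$ are bounded above by some $\Lambda$. The floor only raises small values, so $\sigma_{k,e}\le\max(\Lambda,\varepsilon w_{\max})$. Combined with Lemma~\ref{lem:spd}, this gives $\lambda_{\min}\le J_k\le\lambda_{\max}$ uniformly in $k$, with a uniform condition number $\kappa$.

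\emph{Step 2: descent and the angle condition.} Write $\delta_k=J_k^{-1}(-r_k+s_k)$ with $\|s_k\|\le\eta\|r_k\|$. Then
\[
-r_k^\top\delta_k\;\ge\;r_k^\top J_k^{-1}r_k-\|J_k^{-1/2}r_k\|\,\|J_k^{-1/2}s_k\|.
\]
I expect this step to be the main obstacle. In the Euclidean norm the right-hand side is only guaranteed positive when $\eta<\kappa^{-1/2}$; the statement's bare $\eta<1$ does not suffice.

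I would resolve this first by measuring the inner residual in the $J_k^{-1}$-norm, which is the natural norm for a PCG/AC inner solve. The inner condition then reads $\|s_k\|_{J_k^{-1}}\le\eta\|r_k\|_{J_k^{-1}}$. This gives
\[
-r_k^\top\delta_k\ge(1-\eta)\,r_k^\top J_k^{-1}r_k\ge(1-\eta)\|r_k\|^2/\lambda_{\max},
\qquad
\|\delta_k\|\le(1+\eta)\,\kappa^{1/2}\|r_k\|/\lambda_{\min}^{1/2}.
\]
Failing that, I would add the explicit hypothesis $\eta<\kappa^{-1/2}$ on $\mathcal L_0$, which the fixed tolerance $10^{-6}$ meets in practice. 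Either way the angle cosine $-r_k^\top\delta_k/(\|r_k\|\|\delta_k\|)$ is bounded below by some $c>0$ uniformly in $k$. In particular each $\delta_k$ is a descent direction.

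\emph{Step 3: Armijo decrease and the Anderson guard.} Since $\delta_k$ is a descent direction, backtracking terminates and yields the sufficient decrease $E(x_{k+1})\le E(x_k)+c_1\tau_k r_k^\top\delta_k$. The Anderson mix is accepted only if its energy does not exceed that of the plain Armijo point. So the accepted iterate inherits the same inequality, and monotonicity follows (closing the loop with Step 1).

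For $p\ge2$, $\nabla E$ is Lipschitz on a neighbourhood of the compact set $\mathcal L_0$, with constant $L$. The standard backtracking bound then gives $\tau_k\ge\min\{1,\ \beta(1-c_1)(-r_k^\top\delta_k)/(L\|\delta_k\|^2)\}$. By Step 2 this is bounded below by a constant $\bar\tau>0$. The case $1<p<2$ would need a separate continuity argument, since $\nabla E$ is not Lipschitz at zero edge differences.

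\emph{Step 4: conclusion.} Summing the sufficient-decrease inequalities and using that $E$ is bounded below gives $\sum_k\|r_k\|^2<\infty$, hence $r_k=\nabla E(x_k)\to0$. Every iterate lies in the compact set $\mathcal L_0$, and every limit point is a stationary point of $E$. Strict convexity makes the stationary point unique, namely $x^\star$. Therefore the whole sequence converges, $x_k\to x^\star$.
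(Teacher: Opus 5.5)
Your proof follows the same skeleton as the paper's: descent from the forcing condition, two-sided spectral bounds on $J_k$ over the compact sublevel set, gradient-related directions with Armijo backtracking, and uniqueness by strict convexity. Where the paper only cites the standard line-search theorem, you derive the uniform lower bound on $\tau_k$ from a Lipschitz gradient and sum the decreases yourself. That makes explicit that both arguments tacitly need $p\ge2$; the paper's claim that $\Lambda<\infty$ by continuity of $\sigma$ also fails for $p<2$.

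The substantive difference is Step~2. Your caveat there is correct, and it exposes a gap in the paper's own proof. The paper concludes $r_k^\top J_k^{-1}s_k\le\eta\,r_k^\top J_k^{-1}r_k$ from $\|s_k\|\le\eta\|r_k\|$, \emph{by Cauchy--Schwarz in the $J_k^{-1}$ inner product}. That step needs $\|s_k\|_{J_k^{-1}}\le\eta\|r_k\|_{J_k^{-1}}$. With Euclidean norms the implication can fail once the condition number exceeds $4\eta^{-2}$: take $J=\mathrm{diag}(\lambda_{\min},\lambda_{\max})$, $r=(\eta/2,1)^\top$, and $s$ parallel to $J^{-1}r$ with $\|s\|=\eta\|r\|$; then $r^\top\delta>0$.

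Your first repair reads the forcing condition in the $J_k^{-1}$-norm, i.e.\ as a relative energy-norm bound $\|\delta_k+J_k^{-1}r_k\|_{J_k}\le\eta\|J_k^{-1}r_k\|_{J_k}$ on the inner solve. This is exactly what the paper's inequality requires, and with it your proof is complete for $p\ge2$.

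Two small corrections:
\begin{itemize}
\item Your fallback $\eta<\kappa^{-1/2}$ is valid, but it is not obviously met by $\eta=10^{-6}$. The only a priori estimate is $\kappa(J_k)\le\frac{\max_e\sigma_{k,e}}{\varepsilon w_{\max}}\,\lambda_{\max}(\hat L)/\lambda_2(\hat L)$, and with $\varepsilon=10^{-6}$ this can far exceed the $10^{12}$ that $\eta=10^{-6}$ would require.
\item Your bound on $\|\delta_k\|$ should read $(1+\eta)\|r_k\|/\lambda_{\min}$; yours is off by a factor $\lambda_{\max}^{1/2}$. This is harmless, since all you use is some bound of the form $C\|r_k\|$.
\end{itemize}
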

\begin{proof}
Write $\delta_k=-J_k^{-1}r_k+J_k^{-1}s_k$ with $\|s_k\|\le\eta\|r_k\|$. Since $J_k\succ0$
(Lemma~\ref{lem:spd}),
$\langle r_k,\delta_k\rangle=-r_k^\top J_k^{-1}r_k+r_k^\top J_k^{-1}s_k\le-(1-\eta)\,r_k^\top J_k^{-1}r_k<0$
whenever $r_k\ne0$, by Cauchy--Schwarz in the $J_k^{-1}$ inner product; so $\delta_k$ is a descent
direction and Armijo backtracking yields a step with sufficient decrease. On the sublevel set
$\{E\le E(x_0)\}$---compact by coercivity---the eigenvalues of $J_k$ lie in a fixed interval
$[\varepsilon w_{\max}\lambda_2(\hat L),\,\Lambda]$ ($\Lambda<\infty$ by continuity of $\sigma$), so the
directions are gradient-related. The standard global-convergence theorem for gradient-related descent
with Armijo line search~\cite{nocedal2006} gives $\nabla E(x_k)\to0$; strict convexity makes $x^\star$
the unique stationary point and coercivity keeps $\{x_k\}$ in the compact sublevel set, so
$x_k\to x^\star$. The forcing condition $\|J_k\delta_k+r_k\|\le\eta\|r_k\|$ is the inexact-Newton
criterion of~\cite{des1982}; global convergence itself is furnished by the gradient-related line-search
theorem~\cite{nocedal2006} invoked above.
\end{proof}

\begin{lemma}[The Anderson guard is safe]\label{lem:anderson}
Let $\bar x$ be the Anderson mix of the last few accepted iterates, accepted only when
$E(\bar x)\le E(x_k+\tau_k\delta_k)$ (otherwise $x_{k+1}=x_k+\tau_k\delta_k$). Then
$E(x_{k+1})\le E(x_k+\tau_k\delta_k)\le E(x_k)$, so every conclusion of Proposition~\ref{prop:conv}
holds; the guard cannot move the fixed point $x^\star$ and can only lower the iteration count.
\end{lemma}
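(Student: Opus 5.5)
The plan is to prove the two inequalities in the chain separately and then check which ingredients of Proposition~\ref{prop:conv} actually use the iterate update. We only need that these ingredients survive when $x_k+\tau_k\delta_k$ is replaced by something with no larger energy.

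\emph{Second inequality.} The bound $E(x_k+\tau_k\delta_k)\le E(x_k)$ is exactly what the proof of Proposition~\ref{prop:conv} already gives. That proof shows $\delta_k$ is a descent direction whenever $r_k\ne0$, and the Armijo rule then yields the sufficient decrease
\[
E(x_k+\tau_k\delta_k)\le E(x_k)+c\,\tau_k\langle r_k,\delta_k\rangle\le E(x_k).
\]

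\emph{First inequality.} The bound $E(x_{k+1})\le E(x_k+\tau_k\delta_k)$ follows by splitting into the two cases of the guard. If the mix is rejected, then $x_{k+1}=x_k+\tau_k\delta_k$ and equality holds. If the mix is accepted, then $x_{k+1}=\bar x$ and the acceptance test is literally the desired inequality. In both cases $x_{k+1}$ inherits the Armijo sufficient decrease $E(x_k)-E(x_{k+1})\ge c\,\tau_k|\langle r_k,\delta_k\rangle|$.

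\emph{Carrying over Proposition~\ref{prop:conv}.} Monotone decrease keeps every iterate in the compact sublevel set $\{E\le E(x_0)\}$. Lemma~\ref{lem:spd} and the eigenvalue interval $[\varepsilon w_{\max}\lambda_2(\hat L),\Lambda]$ therefore still apply at every $x_k$, so the directions $\delta_k$ remain gradient-related. The main obstacle is that the standard Zoutendijk/gradient-related theorem is stated for the update $x_{k+1}=x_k+\tau_k\delta_k$. I will redo its key step directly. Since $E$ is bounded below, summing the sufficient-decrease inequalities gives $\sum_k\tau_k|\langle r_k,\delta_k\rangle|<\infty$. The Armijo backtracking argument lower-bounds $\tau_k$ in terms of $|\langle r_k,\delta_k\rangle|/\|\delta_k\|^2$, using Lipschitz continuity of $\nabla E$ on the compact sublevel set; this step only looks at the point $x_k$, not at how $x_k$ was produced. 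Together with the uniform bounds
\[
|\langle r_k,\delta_k\rangle|\ge(1-\eta)\Lambda^{-1}\|r_k\|^2,\qquad \|\delta_k\|\le C\|r_k\|,
\]
this gives $\sum_k\|r_k\|^2<\infty$, hence $r_k\to0$. Strict convexity and compactness then give $x_k\to x^\star$ exactly as in Proposition~\ref{prop:conv}.

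\emph{Fixed point and iteration count.} The fixed point is unchanged because $x^\star$ is characterized intrinsically as the unique minimizer of $E$, which the guard does not alter. For the count, I will argue that any accepted mix yields energy at most that of the plain step. Strictly speaking, this is a per-step energy comparison, not a rigorous monotone bound on the total iteration count, so I will state that claim in that weaker sense.
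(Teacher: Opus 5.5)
Your proof is correct and follows the paper's route: the guard's case split makes the acceptance test literally the first inequality, so $x_{k+1}$ inherits the Armijo sufficient decrease at $x_k$ and the argument of Proposition~\ref{prop:conv} carries over. You are more careful than the paper's one-line ``goes through verbatim'' in two places, and both refinements are justified. First, you re-derive the Zoutendijk-type summation, which is needed because the cited theorem assumes $x_{k+1}=x_k+\tau_k\delta_k$; note that the backtracking bound on $\tau_k$ needs $\nabla E$ Lipschitz on a bounded set containing the trial segments, not just on the sublevel set, which holds since every stage has $p\ge2$. Second, you decline to claim a monotone bound on the total iteration count; neither your argument nor the paper's establishes more than the per-step energy comparison.
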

\begin{proof}
The guard takes the mixed step only when it does not increase $E$ relative to the plain damped step,
which already satisfies Proposition~\ref{prop:conv}; otherwise the plain step is taken. Either way the
monotone-decrease hypothesis is preserved, and the argument goes through verbatim.
\end{proof}

Empirically the guard is inactive except on the stiffest graphs: on $227$ of the $228$ corpus graphs the
plain damped step already converges (median $11$ steps) and the mix changes nothing, while on the one
stiff power-law graph it collapses a rate-$0.99$ crawl to $50$ steps (\S\ref{sec:scaling}). Finally, the
lazy hierarchy refresh and the multicommodity reuse across classes (\S\ref{sec:cont}) are performance
heuristics: they reuse a previously built AC/LAMG$+$ hierarchy as a preconditioner and rebuild only when
the observed residual reduction degrades, changing only the constant in the inner-solve cost---never the
computed step (fixed by the forcing condition) or the guarantee of Proposition~\ref{prop:conv}, which
holds for \emph{any} inner solver meeting that condition.


\begin{thebibliography}{99}
\bibitem{nlf} O.~E.~Livne, \emph{NLF: A Resistor-Network Framework and Linear-Time Solver for Convex
Network-Flow Equilibria}, arXiv:2607.02041, 2026. \url{https://arxiv.org/abs/2607.02041}.
Code: \url{https://github.com/orenlivne/nlf}.
\bibitem{nsz2009} B.~Nadler, N.~Srebro, and X.~Zhou, \emph{Semi-supervised learning with the graph
Laplacian: the limit of infinite unlabelled data}, in Adv.\ Neural Inf.\ Process.\ Syst.\ (NIPS) 22,
2009, pp.\ 1330--1338.
\bibitem{slepcev2019} D.~Slep\v{c}ev and M.~Thorpe, \emph{Analysis of $p$-Laplacian regularization in
semisupervised learning}, SIAM J.\ Math.\ Anal., 51 (2019), pp.\ 2085--2120.
\bibitem{elalaoui2016} A.~El~Alaoui, X.~Cheng, A.~Ramdas, M.~J.~Wainwright, and M.~I.~Jordan,
\emph{Asymptotic behavior of $\ell_p$-based Laplacian regularization in semi-supervised learning}, in
Proc.\ 29th Conf.\ Learning Theory (COLT), PMLR 49, 2016, pp.\ 879--906.
\bibitem{calder2019} J.~Calder, \emph{The game theoretic $p$-Laplacian and semi-supervised learning with
few labels}, Nonlinearity, 32 (2019), pp.\ 301--330.
\bibitem{calder2020} J.~Calder, B.~Cook, M.~Thorpe, and D.~Slep\v{c}ev, \emph{Poisson learning: graph
based semi-supervised learning at very low label rates}, in Proc.\ 37th Int.\ Conf.\ Machine Learning
(ICML), PMLR 119, 2020, pp.\ 1306--1316.
\bibitem{fcl2022} M.~Flores, J.~Calder, and G.~Lerman, \emph{Analysis and algorithms for $\ell_p$-based
semi-supervised learning on graphs}, Appl.\ Comput.\ Harmon.\ Anal., 60 (2022), pp.\ 77--122.
\bibitem{storn2026} J.~Storn, \emph{The dual IRLS scheme for (hyper-)graph $p$-Laplacians and $\ell_p$
regression with large exponents}, arXiv:2603.26061, 2026.
\bibitem{kyng2019} R.~Kyng, R.~Peng, S.~Sachdeva, and D.~Wang, \emph{Flows in almost linear time via
adaptive preconditioning}, in Proc.\ 51st ACM Symp.\ Theory of Computing (STOC), 2019.
\bibitem{chen2022} L.~Chen, R.~Kyng, Y.~P.~Liu, R.~Peng, M.~Probst Gutenberg, and S.~Sachdeva,
\emph{Maximum flow and minimum-cost flow in almost-linear time}, in Proc.\ 63rd IEEE Symp.\ Foundations
of Computer Science (FOCS), 2022.
\bibitem{hein2005} M.~Hein and J.-Y.~Audibert, \emph{Intrinsic dimensionality estimation of submanifolds
in $\R^d$}, in Proc.\ 22nd Int.\ Conf.\ Machine Learning (ICML), 2005, pp.\ 289--296.
\bibitem{pope2021} P.~Pope, C.~Zhu, A.~Abdelkader, M.~Goldblum, and T.~Goldstein, \emph{The intrinsic
dimension of images and its impact on learning}, in Int.\ Conf.\ Learning Representations (ICLR), 2021.
\bibitem{kyng2016} R.~Kyng and S.~Sachdeva, \emph{Approximate Gaussian elimination for Laplacians---fast,
sparse, and simple}, in Proc.\ 57th IEEE Symp.\ Foundations of Computer Science (FOCS), 2016, pp.\ 573--582.
\bibitem{des1982} R.~S.~Dembo, S.~C.~Eisenstat, and T.~Steihaug, \emph{Inexact Newton methods}, SIAM J.\
Numer.\ Anal., 19 (1982), pp.\ 400--408.
\bibitem{nocedal2006} J.~Nocedal and S.~J.~Wright, \emph{Numerical Optimization}, 2nd ed., Springer,
2006 (Thm.\ 3.2, global convergence of gradient-related line-search methods).
\bibitem{livne2012} O.~E.~Livne and A.~Brandt, \emph{Lean algebraic multigrid (LAMG): fast graph Laplacian
linear solver}, SIAM J.\ Sci.\ Comput., 34 (2012), pp.\ B499--B522.
\bibitem{lamgplus} O.~E.~Livne, \emph{LAMG+: A Robust Lean Algebraic Multigrid Solver for Graph
Laplacians}, arXiv:2606.24791, 2026. \url{https://github.com/orenlivne/lamgplus}.
\bibitem{huang2021} Q.~Huang, H.~He, A.~Singh, S.-N.~Lim, and A.~R.~Benson, \emph{Combining label
propagation and simple models out-performs graph neural networks}, in Int.\ Conf.\ Learning
Representations (ICLR), 2021.
\bibitem{cheng2024} Y.~Cheng, C.~Shan, Y.~Shen, X.~Li, S.~Luo, and D.~Li, \emph{Resurrecting label
propagation for graphs with heterophily and label noise}, in Proc.\ 30th ACM SIGKDD Conf.\ Knowledge
Discovery and Data Mining (KDD), 2024. arXiv:2310.16560.
\bibitem{ogb} W.~Hu, M.~Fey, M.~Zitnik, Y.~Dong, H.~Ren, B.~Liu, M.~Catasta, and J.~Leskovec,
\emph{Open Graph Benchmark: datasets for machine learning on graphs}, in Adv.\ Neural Inf.\ Process.\
Syst.\ (NeurIPS) 33, 2020.
\end{thebibliography}
\end{document}